\documentclass[twoside,11pt]{article}

\usepackage[preprint]{jmlr2e}
\usepackage{booktabs}
\usepackage{array}
\usepackage{multirow}
\usepackage{amsmath,amssymb}
\usepackage{xurl}
\usepackage{lastpage}

\newtheorem{assumption}[theorem]{Assumption}
\jmlrheading{}{}{}{}{}{}{\"Ozkan Canay}
\ShortHeadings{SoftMCC: An MCC-Brier Bridge}{Canay}
\firstpageno{1}

\begin{document}

\title{SoftMCC: An MCC-Brier Calibration Bridge for Threshold-Free Model Selection under Class Imbalance}

\author{\name \"Ozkan CANAY \email canay@sakarya.edu.tr \\
       \addr Department of Information Systems and Technologies\\
       Faculty of Computer and Information Sciences\\
       Sakarya University, Esentepe Campus\\
       54050 Serdivan, Sakarya, T\"urkiye}

\maketitle

\begin{abstract}
Model selection for imbalanced binary classification often uses the Matthews
correlation coefficient (MCC), but thresholding makes validation rankings
threshold-dependent. SoftMCC is a post-training MCC validation framework on
established probability-valued confusion counts, coupling an MCC-specific
calibrated identity with a tie-aware, shared-pool selection protocol. Its core
score is a covariance-normalized probability-label association, reduces exactly
to MCC for hard predictions, and is Pearson-bounded. Under perfect population
calibration it equals the Brier skill score with identical candidate ordering;
outside that regime the gap does not identify calibration error. Across 18
settings with 12 duplicate-safe grouped repeats, SoftMCC attains the best
stability mean rank ($2.31$) and highest mean tie-corrected Kendall's $W$
($0.659$), with a significant Friedman test ($p=0.007$); Nemenyi analysis
separates it from AUPRC and MCC@0.5, while 14-source-family sensitivity retains
only the latter. Selected-model utility shows no advantage. Three of six
prespecified comparisons have negative mean test-MCC differences, only
$F_1$@best survives Holm correction ($p=0.014$), and the dataset-level test is
not significant ($p=0.117$). Label permutation lowers mean $W$ to $0.092$;
temperature scaling shifts SoftMCC rankings (mean Spearman $0.851$) whereas
rank-based and threshold-optimized metrics remain invariant. SoftMCC is a
calibration-sensitive MCC-family selector with bounded stability and utility
evidence.
\end{abstract}

\begin{keywords}
Matthews correlation coefficient, model selection, class imbalance,
probability calibration, evaluation metrics
\end{keywords}

\section{Introduction}
\label{sec:intro}

Binary classification under class imbalance is central to fraud detection,
intrusion detection, medical diagnosis, and risk prediction, where rare events
make model-selection errors consequential. Accuracy can be dominated by the
majority class, while summaries such as $F_1$ use only part of the confusion
matrix. MCC instead uses all four cells and is the $\phi$ correlation between
predicted and true binary labels
\citep{matthews1975,chicco2020advantages,chicco2021reliable,
chicco2023replace}. This interpretation keeps positive and
negative errors visible. Metric choice can nevertheless change which classifier
is preferred on the same data \citep{ferri2009experimental}, and imbalance can
alter the behavior and information content of confusion-matrix measures
\citep{luque2019impact,mullick2020appropriateness}.

Threshold-free measures do not remove this choice; they encode different
targets. AUROC summarizes pairwise ranking across thresholds, AUPRC emphasizes
positive-class retrieval under skew, and proper scores assess probability
quality. Hard MCC instead evaluates one binary operating point. None of these
roles makes another metric redundant. The relevant question is whether a
probability-valued MCC can serve a defined validation-time role while preserving
an interpretable connection to hard MCC and calibrated probability assessment.

Classical MCC is nevertheless computed after a decision threshold has turned a
probability into a hard label. In practice the threshold may be fixed at $0.5$,
tuned on a validation split, or selected implicitly by an operating convention.
Modern classifiers usually expose probability scores whose calibration is
itself a modeling issue
\citep{niculescu2005predicting,zadrozny2002transforming,guo2017calibration,
vancalster2016calibration}. Consequently, threshold search, prevalence, and
probability scale can change both the MCC value and the candidate ranking it
induces. Validation-time model selection then becomes entangled with a separate
deployment decision about the operating threshold.

A validation metric often selects among models, hyperparameters, calibration
schemes, or preprocessing variants before the test set is touched. Its
assessment should therefore cover both the reproducibility of the validation
ranking and the held-out utility of the selected model. Stability alone is
insufficient if the resulting choice loses test utility. This study measures
both endpoints and reports calibration sensitivity explicitly.

The comparison is designed around the decision made by the metric. Every
selector ranks the same candidate models from validation predictions, selects
one candidate, and is judged by that candidate's test MCC. This avoids comparing
raw values from metrics with different scales or meanings. MCC@best and
$F_1$@best remain threshold-tuned selection benchmarks, whereas AUROC, AUPRC,
Brier, MCC@0.5, and SoftMCC provide distinct validation rankings. Repeated
splits then reveal whether a selector's ranking and resulting choice depend on a
particular partition.

A natural response is to replace hard confusion-matrix counts with probability
mass. Probability-valued confusion matrices already support post-training
classifier evaluation through probabilistic confusion entropy and
confidence-weighted precision, recall, and $F_1$; a general probabilistic-matrix
formulation also covers confusion-matrix measures including MCC
\citep{wang2013pcen,yacouby2020probabilistic,aguilarruiz2024certainty}.
Soft-count MCC has separately appeared as a differentiable training loss
\citep{abhishek2021mcc}, while AnyLoss \citep{han2024anyloss} and the soft-set
formulation of \citet{tsoi2022bridging} cover the broader optimization lane.

SoftMCC builds on this shared kernel with an MCC-specific theory and selection
protocol. The formal analysis establishes its covariance form, exact hard-label
reduction, and a one-way identity with the Brier skill score under perfect
population calibration. The empirical protocol evaluates tie-aware ranking
reproducibility and selected-model hard-MCC utility on a shared candidate pool.

Therefore, SoftMCC is a bounded theory-to-selection bridge, not a new
probability-valued confusion matrix. Equality with the Brier skill score does
not certify calibration, and disagreement does not quantify calibration error.
These boundaries motivate three research questions.

\textit{RQ1}: Does the soft-count MCC formula have a simple statistical
interpretation, and does it remain within the MCC family when predictions
become hard labels?

\textit{RQ2}: Does a SoftMCC-based validation ranking retain test MCC
relative to established selectors such as MCC@best, $F_1$@best, AUROC, AUPRC,
and the Brier score?

\textit{RQ3}: How reproducible are SoftMCC-induced candidate rankings
across repeated splits, how do they change under monotone calibration shift,
and can label permutation together with the $\rho$ and $\kappa$ factorization
separate label association from probability sharpness as a mechanism?

The empirical design covers 18 imbalanced tabular settings evaluated with 12
grouped repeated splits per setting. Exact feature-label duplicate groups remain
within one split, all selectors use identical cached candidate predictions, and
ties receive midranks. Because several benchmark settings are derived from
related source datasets, the primary setting-level analysis is accompanied by a
conservative source-family analysis that averages related variants before
ranking. This second unit of analysis defines the defensible generalization
boundary for the stability claim.

Formal analysis characterizes the SoftMCC core score as a covariance-normalized
association between predicted probabilities and labels, with exact reduction to
MCC for binary predictions and a Pearson-correlation bound. The empirical study
separately evaluates ranking reproducibility and selected-model utility. Its
stability evidence is bounded by the setting-level and source-family analyses,
its utility evidence is mixed, and calibration sensitivity limits
interpretation.

Extensions that alter the utility target remain outside the present scope.
Cost-sensitive weighting, reference-prevalence adjustment, decision-analytic
net benefit, and temporal drift monitoring require separate protocols rather
than changes to the threshold-free validation score. The evaluation
question applies to recognition systems whose candidates emit calibrated
posterior probabilities, but the evidence covers imbalanced tabular binary
tasks only; image, text, and multiclass settings require their own protocols.

Against this background, the contribution is deliberately bounded:
\begin{itemize}
  \item A post-training MCC framework that couples a probability-valued core
        score to a threshold-free validation rule and evaluates each selected
        model by the same held-out hard-MCC endpoint.
  \item A closed-form characterization covering covariance form, exact
        hard-label reduction, a Pearson bound, the $\rho$ and $\kappa$
        decomposition of label association and probability sharpness, and
        calibration dependence.
  \item A one-way bridge under perfect population calibration: SoftMCC equals
        the Brier skill score and induces the same candidate ordering, while a
        gap from this identity is not treated as an identifiable calibration
        error.
  \item An 18-setting, duplicate-safe study of ranking reproducibility and
        selected-model utility. Conservative source-family aggregation retains
        separation only from MCC@0.5; the six-baseline utility profile and
        calibration sensitivity constrain the interpretation.
\end{itemize}

The remainder of the manuscript is organized as follows. Section~\ref{sec:related}
positions SoftMCC relative to MCC-based evaluation, metric losses, probability
calibration, and statistical comparison practice. Section~\ref{sec:method}
defines the core score and its properties; Sections~\ref{sec:design} and
\ref{sec:results} describe the protocol and evidence. Sections~\ref{sec:discussion}
and \ref{sec:conclusion} state the remaining limits.
\section{Related Work}
\label{sec:related}

MCC \citep{matthews1975} is the Pearson $\phi$ coefficient between predicted and
true binary labels. It has been promoted as more informative than accuracy,
$F_1$, balanced accuracy, markedness, and several other confusion-matrix
summaries for imbalanced two-class evaluation
\citep{chicco2020advantages,chicco2021reliable,chicco2023replace}.
Broader surveys of imbalanced learning and evaluation
emphasize that metric choice must be tied to prevalence, threshold policy, and
operating assumptions \citep{he2009learning,krawczyk2016learning}.
Cost-sensitive work similarly shows that the chosen metric can change the
preferred classifier when error costs or class priors move
\citep{elkan2001cost}.
\citet{luque2019impact} systematically analyze
binary confusion-matrix metrics under imbalance and identify MCC as the
preferred option when classification errors as well as successes must be
represented. \citet{mullick2020appropriateness} complement that analysis with
formal distortion-resilience and information-retention conditions for binary
and multiclass performance indices.
Recent empirical studies sharpen this point for imbalanced classification by
showing that performance-measure choice can alter model assessment and model
selection, with MCC and precision-recall based summaries often behaving more
reliably than accuracy or ROC-only summaries under skew
\citep{gaudreault2024empirical,delacruz2025performance}.
Prior MCC work, including classifier design with MCC objectives
\citep{boughorbel2017optimal}, still treats MCC as a hard-label statistic at a
fixed or tuned threshold.

The evaluation-metric literature also warns that threshold-free summaries answer
different questions. ROC analysis summarizes rankings across possible thresholds
\citep{fawcett2006roc}, while precision-recall analysis is often
more informative under severe skew \citep{davis2006relationship,
saito2015precision,flach2015precision}. Cost curves and H-measure style
arguments make the cost distribution explicit instead of hiding it inside a
single threshold choice \citep{drummond2006cost,hand2009hmeasure,
hernandez2012unified}. These lines motivate the cautious position that a new
probability-valued MCC score should not be sold as a universal replacement for
AUROC, AUPRC, or MCC@best, but it can be evaluated as a model-selection score
with a clearly stated operating role.

General performance-measure comparisons in the pattern recognition literature
support the same caution. Different
metrics encode different loss surfaces, prevalence assumptions, and implicit
tradeoffs, so empirical rankings can change when the performance measure changes
\citep{ferri2009experimental,hernandez2012unified}.
Recent analytical work makes those tradeoffs explicit by mapping common binary
metrics to cost-benefit ratios derived from a reward matrix
\citep{shirdel2026framework}.
Recent metric-and-test guidance operationally recommends pairing a metric
with a comparison protocol and statistical test that match the
supervised-learning task \citep{rainio2024evaluation}. Proper scoring rules add
a separate view by rewarding calibrated probability assessments rather than
thresholded classifications \citep{gneiting2007strictly}. These results are
directly relevant for the SoftMCC framework because a validation score should be
evaluated by the decision it is meant to support. In this manuscript that decision is
validation-set model selection, not final deployment thresholding or direct loss
minimization.

Probability-valued confusion matrices also have a direct evaluation lineage.
\citet{wang2013pcen} aggregate class probabilities within true-class rows and
derive probabilistic confusion entropy for classifier assessment.
\citet{yacouby2020probabilistic} use a probabilistic confusion matrix to define
confidence-weighted precision, recall, and $F_1$ after predictions are available.
In a 2024 arXiv preprint, \citet{aguilarruiz2024certainty} writes the probabilistic
matrix as $CM^{\star}=T^{\mathsf T}Q$ and describes how confusion-matrix measures,
including MCC, can be evaluated on it. These studies establish both the probabilistic-count
kernel and its post-training use.

The adjacent probability-valued formulations answer complementary evaluation
questions. The probabilistic confusion entropy of \citet{wang2013pcen} is built from
class-probability mass and is compared with mean absolute error, mean squared
error, and multiclass AUC variants; its experiments also examine which
classifier each measure would select as best. The confidence-weighted measures
of \citet{yacouby2020probabilistic} compute precision and recall from the
probabilistic matrix rather than only from the winning class. Their full text
explicitly separates those measures from probability calibration and
recommends a calibration measure or a proper score when calibrated probabilities
are required. \citet{aguilarruiz2024certainty} supplies the broadest algebraic
form, in which replacing a conventional confusion matrix by $CM^{\star}$ turns
any confusion-matrix measure into its probability-valued counterpart. That work
also decomposes $CM^{\star}$ into certainty and uncertainty components and
evaluates a certainty ratio over multiple classifiers and datasets.
Probability-valued evaluation, generic metric substitution, and even
classifier choice under a probability-aware measure are established parts of
the evaluation literature.

A structured comparison of the work most directly adjacent to this manuscript
appears in Table~\ref{tab:recent-work}. The table separates method, data,
metric, and finding lanes because the contribution crosses probability-valued
evaluation, calibration, metric interpretation, and model selection. SoftMCC
joins these lanes through an MCC-specific closed-form result and a
shared-candidate protocol for ranking reproducibility and selected-model
utility.

\begin{table*}[t]
\centering
\caption{Adjacent work separates probability-valued confusion evaluation, metric losses, calibration, and model-selection comparison.}
\label{tab:recent-work}
\small
\setlength{\tabcolsep}{3pt}
\begin{tabular*}{\textwidth}{@{\extracolsep{\fill}}>{\raggedright\arraybackslash}p{0.18\textwidth}c >{\raggedright\arraybackslash}p{0.19\textwidth}>{\raggedright\arraybackslash}p{0.18\textwidth}>{\raggedright\arraybackslash}p{0.27\textwidth}@{}}
\toprule
Reference & Year & Method lane & Data or setting & Metric or finding lane\\
\midrule
\citet{wang2013pcen} & 2013 & probabilistic confusion matrix & multiclass classifiers & probabilistic confusion entropy\\
\citet{yacouby2020probabilistic} & 2020 & probabilistic confusion matrix & NLP classifiers & confidence-weighted precision, recall, and $F_1$\\
\citet{abhishek2021mcc} & 2021 & soft MCC loss & segmentation & differentiable MCC training\\
\citet{tsoi2022bridging} & 2022 & soft sets & binary classifiers & training-evaluation alignment\\
\citet{chicco2023replace} & 2023 & hard MCC evaluation & binary tasks & MCC versus ROC AUC\\
\citet{vanzyl2025analysis} & 2025 & metric sensitivity analysis & model-agnostic analytical setting & imbalance sensitivity and normalized variants\\
\citet{fonseca2026high} & 2026 & balanced calibration metric & transformer text classifiers & Balanced Brier Score for high-effectiveness regimes\\
\citet{aguilarruiz2024certainty} & 2024 & probabilistic confusion matrix & 21 classification datasets & generic probabilistic measure transform and certainty ratio\\
\citet{dong2025survey} & 2025 & calibration survey and comparison & deep classifiers under class imbalance & 60-method taxonomy and evaluation guidance\\
\citet{delacruz2025performance} & 2025 & metric comparison & imbalanced binary data & model-selection behavior of common metrics\\
\citet{lin2026probability} & 2026 & posterior correction & simulations and eight imbalanced datasets & calibration repair after class rebalancing\\
\citet{shirdel2026framework} & 2026 & reward-matrix framework & analytical binary setting & cost-benefit ratios for common metrics\\
\bottomrule
\end{tabular*}
\end{table*}

SoftMCC advances the probability-valued evaluation line at the point where the
correlation-type score meets proper scoring and repeated model selection. Its
population analysis establishes
the calibrated equality of probability-valued MCC and the Brier skill score,
with the same candidate ordering, and shows that equality does
not imply calibration. This non-converse prevents the bridge from being read as
a calibration diagnostic. The empirical protocol makes the characterization
operational by ranking one shared candidate pool with every selector, correcting
ties in repeated-split concordance, and evaluating each selected model by the
same held-out hard-MCC endpoint. None of the three direct probabilistic-confusion
predecessors combines that calibrated MCC-Brier identity, its non-converse,
tie-aware ranking reproducibility, and selected-model utility on a shared pool.
The resulting contribution is an MCC-specific theoretical and
model-selection characterization of an established evaluation kernel.
This combination renders the distinction from those predecessors directly
falsifiable at both the population-identity and repeated-selection levels
without redefining the probability-valued counts.

The same kernel has a separate optimization lineage. \citet{abhishek2021mcc}
use soft-count MCC as a loss to train segmentation networks. Related
differentiable metric work includes Dice-family losses
\citep{sudre2017generalised}, focal loss \citep{lin2017focal}, soft-set metric
formulations \citep{tsoi2022bridging}, and AnyLoss-style transformations of
classification metrics into losses \citep{han2024anyloss}.
SoftMCC instead leaves candidate training unchanged and applies its
MCC-specific characterization and shared-candidate protocol after calibrated
predictions are available. Training losses are therefore not used as baselines,
because they would change the optimization objective rather than the validation
selector.

Proper scoring rules such as the Brier score \citep{brier1950verification}
evaluate probability
quality rather than hard-label association, and calibration methods such as
Platt scaling, isotonic regression, beta calibration, and modern neural-network
temperature scaling quantify whether scores can be read as probabilities
\citep{niculescu2005predicting,zadrozny2002transforming,
kull2017beta,guo2017calibration}. Calibration reviews and risk-model guidance
stress that probability scores are decision inputs, not only ranking devices
\citep{vancalster2016calibration,silva2023calibration}.
Recent calibration work under class imbalance further emphasizes that minority
class skew can make posterior-probability calibration more difficult
\citep{dong2025survey}. Rebalancing can also distort posterior estimates;
monotone post-hoc corrections have recently been studied on simulations and
eight real imbalanced datasets \citep{lin2026probability}. SoftMCC therefore
sits between rank metrics and probability scores. It is threshold-free like
AUROC and AUPRC, but it is not rank-invariant because its closed form depends on
probability magnitudes. This position is made exact in
Section~\ref{sec:method}, where under perfect calibration the score coincides with the
Brier skill score, so its relation to proper scoring rules is identity in the
calibrated limit rather than competition.
Departures from that identity are calibration-sensitive, but their magnitude is
not an identifiable measure of calibration error and equality does not imply
calibration. This
differs from prior comparisons of the hard-label MCC and the Brier score as
competing evaluation metrics \citep{chicco2021mccbrier} because the present result is an
algebraic identity between the probability-valued MCC and the Brier skill score
that holds only in the calibrated regime, not a comparison of which hard-label
metric is more informative.

Benchmark construction is another source of evidence risk. Hyperparameter
search can alter empirical comparisons when candidate pools are small or unevenly
tuned \citep{bergstra2012random,probst2019tunability}; benchmark suites are
useful because they make such comparisons less dependent on a single data set
\citep{olson2017pmlb}. Statistical tests over repeated splits also need care
because standard paired tests can be optimistic when resampling dependence is ignored
\citep{nadeau2003inference,rainio2024evaluation}. The
present protocol therefore uses a shared candidate pool, paired nonparametric
summaries, multi-dataset rank tests, and a prespecified expansion to 18 settings;
related-setting dependence remains a limitation.

Statistical comparison is the final context because the empirical question is
metric-induced model selection, not training accuracy. Paired comparisons across
repeated splits require uncertainty summaries that respect dependence within a
dataset; Wilcoxon signed-rank tests, Cliff's dominance effect size, and BCa
bootstrap intervals provide complementary views of paired differences
\citep{wilcoxon1945individual,cliff1993dominance,efron1979bootstrap}. For
multi-dataset rankings, the Friedman test and Nemenyi critical-difference
procedure remain common baselines, with extensions and cautions discussed in the
classifier-comparison literature \citep{demsar2006statistical,benavoli2017time}.
This manuscript uses those tools conservatively and reports
non-significant selection differences as similar observed
selected-model utility, not as mathematical equality.

MCC remains the hard-label correlation reference point, proper scoring rules
characterize probability quality, and probability-valued confusion matrices
supply the shared evaluation kernel. SoftMCC contributes the MCC-specific
calibrated bridge and a tie-aware validation-ranking characterization, which the
next two sections formalize and test.
\section{Methodology}
\label{sec:method}

This section defines the SoftMCC core score and the validation protocol that
uses it. It fixes the notation and marginal assumptions, introduces the
probability-valued counts behind the core score, establishes the algebraic
properties that connect that score to hard MCC, Pearson correlation and the
Brier skill score, and specifies the model-selection protocol under which the
empirical study evaluates it.

\subsection{Notation and admissible setting}
The evaluation framework studied here combines a SoftMCC core score with a
metric-induced validation ranking that selects a model for held-out test
evaluation.
Let $\mathcal D_n=\{(y_i,p_i)\}_{i=1}^n$ denote an evaluation sample, where
$y_i\in\{0,1\}$ is the observed label and $p_i\in[0,1]$ is the calibrated
predicted probability assigned to the positive class. Define
\begin{align}
\bar p&=\frac{1}{n}\sum_{i=1}^n p_i,&
\bar y&=\frac{1}{n}\sum_{i=1}^n y_i, \nonumber\\
\widehat{\mathrm{cov}}(p,y)
&=\frac{1}{n}\sum_{i=1}^n(p_i-\bar p)(y_i-\bar y).
\label{eq:sample-moments}
\end{align}
\begin{assumption}[Non-degenerate evaluation marginals]
All theoretical statements below are for $0<\bar p<1$ and $0<\bar y<1$. When a
validation split has a degenerate predicted or label marginal, MCC-family scores
are not defined as correlations; the implementation uses a small numerical guard
only to avoid division by zero, not to change the target score.
\end{assumption}

\subsection{Soft confusion counts and core sample score}
The SoftMCC framework keeps the four-cell structure of MCC but uses a
probability-valued core score after candidate predictions are available. Define
\begin{align}
\widehat{\mathrm{TP}}&=\sum_{i=1}^n p_i y_i,&
\widehat{\mathrm{FP}}&=\sum_{i=1}^n p_i(1-y_i), \nonumber\\
\widehat{\mathrm{FN}}&=\sum_{i=1}^n (1-p_i)y_i,&
\widehat{\mathrm{TN}}&=\sum_{i=1}^n (1-p_i)(1-y_i).
\label{eq:soft-counts}
\end{align}

\begin{definition}[SoftMCC core sample score]
Within SoftMCC, the sample core score is the MCC formula evaluated on the soft
counts in Eq.~\eqref{eq:soft-counts}:
\begin{equation}
\widehat S_n
=\frac{\widehat{\mathrm{TP}}\widehat{\mathrm{TN}}
-\widehat{\mathrm{FP}}\widehat{\mathrm{FN}}}
{\sqrt{\widehat D}},
\label{eq:softmcc}
\end{equation}
where
\begin{align}
\widehat D
&=(\widehat{\mathrm{TP}}+\widehat{\mathrm{FP}})
(\widehat{\mathrm{TP}}+\widehat{\mathrm{FN}}) \nonumber\\
&\quad\times
(\widehat{\mathrm{TN}}+\widehat{\mathrm{FP}})
(\widehat{\mathrm{TN}}+\widehat{\mathrm{FN}}).
\label{eq:soft-denominator}
\end{align}
\end{definition}

The complete study flow is summarized in Figure~\ref{fig:theory-map}, which
connects the duplicate-safe split and calibration protocol to the SoftMCC core,
the one-way Brier bridge, metric-induced model selection, and the empirical
evidence lanes used to bound the contribution.

\begin{figure*}[t]
\centering
\includegraphics[width=\textwidth]{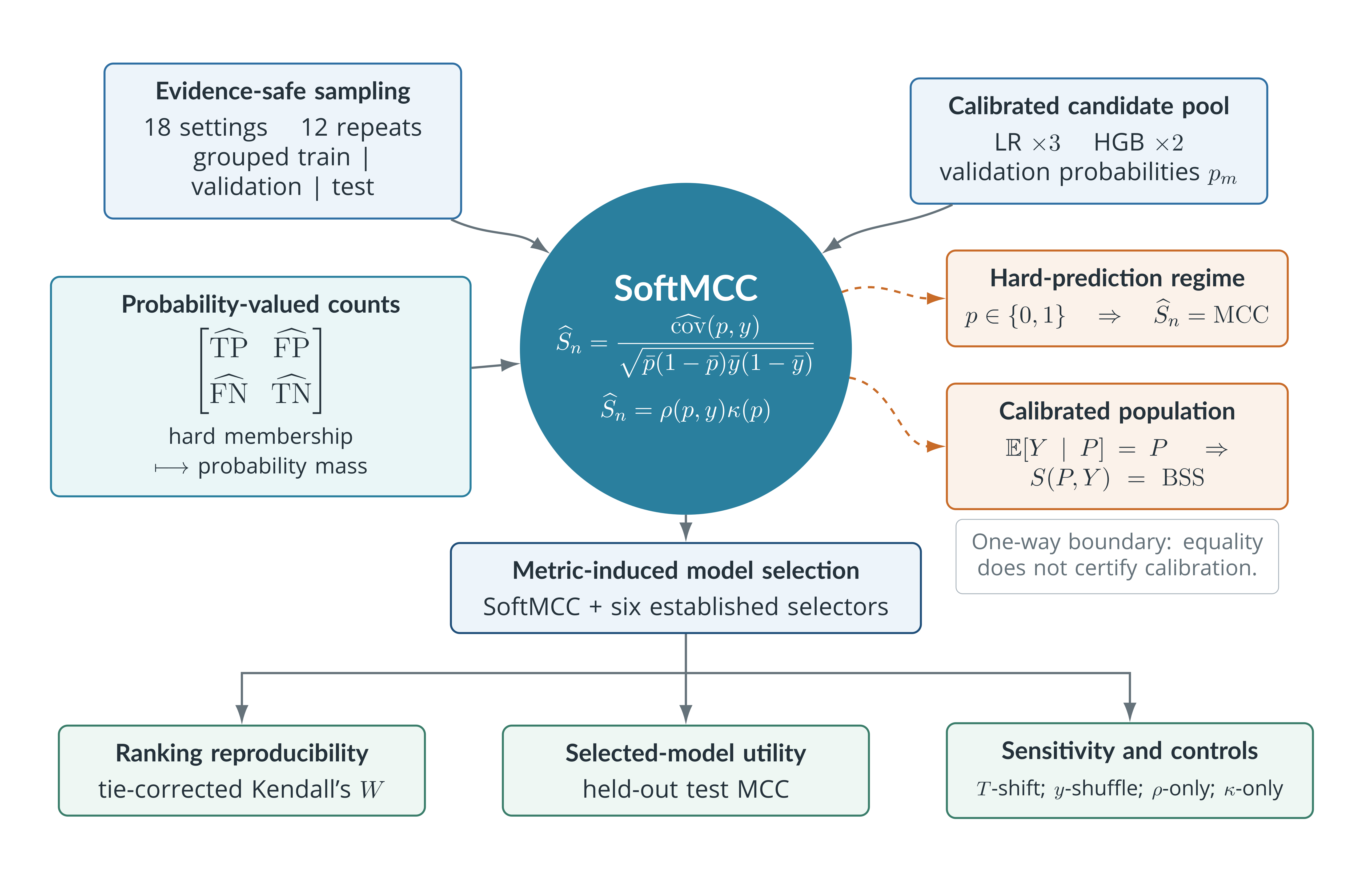}
\caption{The duplicate-safe SoftMCC study design links calibrated candidate probabilities to the probability-valued MCC core, its one-way Brier bridge, and evidence on ranking reproducibility, selected-model utility, calibration sensitivity, and mechanism controls.}
\label{fig:theory-map}
\end{figure*}

\subsection{Formal properties}
The following properties are algebraic identities for non-degenerate
marginals. They connect the probability-valued core to hard MCC, Pearson
correlation, calibration, and the Brier skill score before the evaluation
protocol is defined.

\begin{proposition}[Closed form]
The core score in Eq.~\eqref{eq:softmcc} equals
\begin{equation}
\widehat S_n
=\frac{\widehat{\mathrm{cov}}(p,y)}
{\sqrt{\bar p(1-\bar p)\,\bar y(1-\bar y)}}.
\label{eq:closed}
\end{equation}
\end{proposition}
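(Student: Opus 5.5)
The plan is to express each soft count in terms of the sample moments of Eq.~\eqref{eq:sample-moments} and then simplify numerator and denominator separately. Write $m=\frac1n\sum_i p_iy_i$. Then $\widehat{\mathrm{TP}}=nm$, and since $\sum_i p_i=n\bar p$ and $\sum_i y_i=n\bar y$, the remaining counts follow from Eq.~\eqref{eq:soft-counts}:
$\widehat{\mathrm{FP}}=n(\bar p-m)$, $\widehat{\mathrm{FN}}=n(\bar y-m)$, and $\widehat{\mathrm{TN}}=n(1-\bar p-\bar y+m)$. The covariance can also be written as $\widehat{\mathrm{cov}}(p,y)=m-\bar p\bar y$.

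For the numerator, I will expand
\[
\widehat{\mathrm{TP}}\widehat{\mathrm{TN}}-\widehat{\mathrm{FP}}\widehat{\mathrm{FN}}
=n^2\bigl[m(1-\bar p-\bar y+m)-(\bar p-m)(\bar y-m)\bigr].
\]
The $m^2$ terms cancel, and so do the $\pm m\bar p$ and $\pm m\bar y$ terms. What remains is $n^2(m-\bar p\bar y)=n^2\,\widehat{\mathrm{cov}}(p,y)$.

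For the denominator, the key step is that the four marginal sums collapse exactly. Indeed $\widehat{\mathrm{TP}}+\widehat{\mathrm{FP}}=\sum_i p_i=n\bar p$ and $\widehat{\mathrm{TP}}+\widehat{\mathrm{FN}}=n\bar y$. Likewise $\widehat{\mathrm{TN}}+\widehat{\mathrm{FP}}=\sum_i(1-y_i)=n(1-\bar y)$ and $\widehat{\mathrm{TN}}+\widehat{\mathrm{FN}}=\sum_i(1-p_i)=n(1-\bar p)$. This uses only $p_i(1-y_i)+(1-p_i)(1-y_i)=1-y_i$ and its analogues. Hence $\widehat D=n^4\,\bar p(1-\bar p)\,\bar y(1-\bar y)$.

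Under the Assumption (non-degenerate marginals), each of these four factors is strictly positive. So $\sqrt{\widehat D}=n^2\sqrt{\bar p(1-\bar p)\bar y(1-\bar y)}$ is well defined and nonzero. Dividing the numerator by it, the factors $n^2$ cancel and Eq.~\eqref{eq:closed} follows.

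The only point needing care is the bookkeeping of which marginal pairs with which factor in Eq.~\eqref{eq:soft-denominator}. This matters because the product is symmetric only up to that pairing. No genuine obstacle is expected; the proof is a direct computation.
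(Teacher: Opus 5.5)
Your proof is correct and follows the paper's argument essentially step for step. The paper writes the soft counts using the raw sums $S_p=\sum_i p_i$, $S_y=\sum_i y_i$, $S_{py}=\sum_i p_iy_i$ rather than your normalized moments, reduces the numerator to $nS_{py}-S_pS_y=n^2\,\widehat{\mathrm{cov}}(p,y)$, and collapses the four denominator factors to $S_p$, $S_y$, $n-S_p$, $n-S_y$; your explicit appeal to the non-degeneracy assumption to justify dividing by $\sqrt{\widehat D}$ is a small improvement, since the paper leaves that implicit.
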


\noindent\emph{Proof.}
Let $S_p=\sum_i p_i$, $S_y=\sum_i y_i$, and $S_{py}=\sum_i p_i y_i$.
Substitution into Eq.~\eqref{eq:soft-counts} gives
\begin{align*}
\widehat{\mathrm{TP}}&=S_{py},&
\widehat{\mathrm{FP}}&=S_p-S_{py},\\
\widehat{\mathrm{FN}}&=S_y-S_{py},&
\widehat{\mathrm{TN}}&=n-S_p-S_y+S_{py}.
\end{align*}
Using the sample moments in Eq.~\eqref{eq:sample-moments}, the MCC numerator
becomes
\[
\widehat{\mathrm{TP}}\widehat{\mathrm{TN}}
-\widehat{\mathrm{FP}}\widehat{\mathrm{FN}}
=nS_{py}-S_pS_y
=n^2\,\widehat{\mathrm{cov}}(p,y).
\]
The four marginals are $S_p$, $S_y$, $n-S_p$, and $n-S_y$, so the denominator
in Eq.~\eqref{eq:soft-denominator} is
\[
\sqrt{S_pS_y(n-S_p)(n-S_y)}
=n^2\sqrt{\bar p(1-\bar p)\bar y(1-\bar y)}.
\]
Dividing numerator by denominator gives Eq.~\eqref{eq:closed}. Thus SoftMCC is a
correlation-type statistic between probabilities and labels, normalized on the
probability side by the Bernoulli variance $\bar p(1-\bar p)$ instead of by the
empirical variance of the probability vector.

\begin{definition}[Population SoftMCC core score]
For a random pair $(P,Y)$ with $P\in[0,1]$, $Y\in\{0,1\}$,
$\mu=\mathbb E[P]$, $\pi=\mathbb E[Y]$, $m_{PY}=\mathbb E[PY]$, and non-zero
denominator, define
\begin{equation}
S(P,Y)=
\frac{m_{PY}-\mu\pi}
{\sqrt{\mu(1-\mu)\,\pi(1-\pi)}}.
\label{eq:population-functional}
\end{equation}
\end{definition}

\begin{proposition}[Reduction to MCC]
\label{prop:reduction-mcc}
If every $p_i\in\{0,1\}$, $\widehat S_n$ equals classical MCC.
\end{proposition}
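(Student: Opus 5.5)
The plan is to work directly from the definitions of the soft counts in Eq.~\eqref{eq:soft-counts} rather than from the closed form. When every $p_i\in\{0,1\}$, each summand $p_iy_i$, $p_i(1-y_i)$, $(1-p_i)y_i$, $(1-p_i)(1-y_i)$ is the indicator of exactly one of the four events $\{p_i=1,y_i=1\}$, $\{p_i=1,y_i=0\}$, $\{p_i=0,y_i=1\}$, $\{p_i=0,y_i=0\}$. Summing over $i$ therefore shows that $\widehat{\mathrm{TP}},\widehat{\mathrm{FP}},\widehat{\mathrm{FN}},\widehat{\mathrm{TN}}$ coincide with the hard confusion counts $\mathrm{TP},\mathrm{FP},\mathrm{FN},\mathrm{TN}$ obtained by treating $p_i$ as the predicted label.

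With the four counts identified, the numerator $\widehat{\mathrm{TP}}\widehat{\mathrm{TN}}-\widehat{\mathrm{FP}}\widehat{\mathrm{FN}}$ and the denominator $\widehat D$ in Eq.~\eqref{eq:soft-denominator} become, term by term, the numerator and squared denominator of classical MCC. Hence $\widehat S_n$ in Eq.~\eqref{eq:softmcc} equals MCC. The non-degeneracy assumption guarantees that both are defined: $0<\bar p<1$ means at least one positive and one negative prediction, and $0<\bar y<1$ means both classes are present, so all four marginal sums are positive and $\widehat D>0$.

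As a cross-check, and to tie the result to the preceding proposition, I would also verify it through the closed form Eq.~\eqref{eq:closed}. For binary $p_i$ we have $p_i^2=p_i$, so the empirical variance of $p$ is exactly $\bar p(1-\bar p)$. Likewise the variance of $y$ is $\bar y(1-\bar y)$. Equation~\eqref{eq:closed} is then the Pearson correlation of two binary vectors, the $\phi$ coefficient, which is MCC.

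There is no real obstacle here. The only point needing care is the definitional one: stating that MCC is undefined, rather than set to zero, when a marginal degenerates, so that the equality is asserted exactly under the standing assumption.
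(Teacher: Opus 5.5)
Your proposal is correct and follows the paper's own proof. That proof likewise observes that binary $p_i$ turn the soft counts into the hard confusion counts, and that $\mathrm{var}(p)=\bar p(1-\bar p)$ makes Eq.~\eqref{eq:closed} the Pearson $\phi$ coefficient; your explicit check that the non-degeneracy assumption makes all four marginals positive, so $\widehat D>0$, is a small addition the paper leaves implicit.
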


\noindent\emph{Proof.}
For binary $p$, the soft counts are the usual hard confusion-matrix counts,
$\bar p(1-\bar p)=\mathrm{var}(p)$ and Eq.~\eqref{eq:closed} equals the Pearson
$\phi$ coefficient between the two binary vectors, which is the classical MCC.

\begin{proposition}[Relation to Pearson correlation and bounded range]
\label{prop:pearson-bound}
For $p_i\in[0,1]$,
\[
|\widehat S_n|\le |\rho(p,y)|\le1,
\]
where $\rho(p,y)$ is the Pearson correlation between the probability and label
vectors.
\end{proposition}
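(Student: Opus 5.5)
Starting from the closed form in Eq.~\eqref{eq:closed}, write $\widehat S_n=\widehat{\mathrm{cov}}(p,y)/\sqrt{\bar p(1-\bar p)\,\bar y(1-\bar y)}$. Since $y$ is binary, its empirical variance is $\widehat{\mathrm{var}}(y)=\bar y(1-\bar y)$, so
\[
\widehat S_n=\rho(p,y)\,\sqrt{\frac{\widehat{\mathrm{var}}(p)}{\bar p(1-\bar p)}},
\qquad
\widehat{\mathrm{var}}(p)=\frac1n\sum_{i=1}^n(p_i-\bar p)^2 .
\]
The key step is therefore the variance inequality $\widehat{\mathrm{var}}(p)\le\bar p(1-\bar p)$ for $p_i\in[0,1]$. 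Given that inequality, the square-root factor lies in $[0,1]$, which gives $|\widehat S_n|\le|\rho(p,y)|$. The second inequality, $|\rho|\le 1$, is Cauchy--Schwarz applied to the centred vectors.

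For the variance inequality, I use $p_i^2\le p_i$ on $[0,1]$. This gives
\[
\widehat{\mathrm{var}}(p)=\frac1n\sum_i p_i^2-\bar p^2\le \bar p-\bar p^2=\bar p(1-\bar p).
\]
Equality holds exactly when every $p_i\in\{0,1\}$, which matches Proposition~\ref{prop:reduction-mcc}.

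One degenerate case needs care: $\widehat{\mathrm{var}}(p)=0$, that is, constant $p$. Then $\rho$ is undefined, but the covariance is zero, so $\widehat S_n=0$. I state the bound either with the convention $\rho=0$ or directly in covariance form: by Cauchy--Schwarz,
\[
|\widehat{\mathrm{cov}}(p,y)|\le\sqrt{\widehat{\mathrm{var}}(p)\,\widehat{\mathrm{var}}(y)}\le\sqrt{\bar p(1-\bar p)\,\bar y(1-\bar y)},
\]
which gives $|\widehat S_n|\le1$ without dividing by $\widehat{\mathrm{var}}(p)$. The Assumption of non-degenerate marginals makes the denominator $\bar p(1-\bar p)$ positive, so no other obstacle arises. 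The only real content is the Bernoulli variance bound.
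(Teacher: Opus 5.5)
Your proof is correct and follows the paper's argument: you factor $\widehat S_n=\rho(p,y)\sqrt{\widehat{\mathrm{var}}(p)/\{\bar p(1-\bar p)\}}$ using $\widehat{\mathrm{var}}(y)=\bar y(1-\bar y)$, then apply the Bernoulli variance bound $\widehat{\mathrm{var}}(p)\le\bar p(1-\bar p)$. You also do two things the paper leaves implicit: you justify that bound via $p_i^2\le p_i$, and you treat the constant-$p$ case, which the Assumption still allows and where $\rho$ is undefined.
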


\noindent\emph{Proof.}
Since $\mathrm{var}(y)=\bar y(1-\bar y)$,
\[
\widehat S_n
=\rho(p,y)\sqrt{\frac{\mathrm{var}(p)}{\bar p(1-\bar p)}},
\]
where $\rho$ is the Pearson correlation. Because
$\mathrm{var}(p)\le\bar p(1-\bar p)$ for $p\in[0,1]$ (equality iff $p$ is
binary), $|\widehat S_n|\le|\rho(p,y)|\le1$, so
$\widehat S_n\in[-1,1]$.

\begin{proposition}[Consistency and asymptotic uncertainty]
Assume $(P_i,Y_i)$ are i.i.d., $0<\mu<1$, $0<\pi<1$, and
$\Sigma=\mathrm{Var}(P,Y,PY)$ is finite. Then
\[
\widehat S_n \xrightarrow{p} S(P,Y).
\]
For delta-method uncertainty, with
\[
h(a,b,c)=\frac{c-ab}{\sqrt{a(1-a)b(1-b)}},
\]
the delta method gives
\begin{align}
\sqrt n\{\widehat S_n-S(P,Y)\}
&\Rightarrow N(0,\sigma_S^2), \nonumber\\
g_\theta&=\nabla h(\theta),\qquad
\theta=(\mu,\pi,m_{PY}), \nonumber\\
\sigma_S^2&=g_\theta^\top\Sigma g_\theta.
\label{eq:delta-method}
\end{align}
\end{proposition}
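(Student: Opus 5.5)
The plan is to write the sample score as a smooth function of three sample means and then apply the law of large numbers and the delta method. By the closed form in Proposition~1, with $\bar p$, $\bar y$ and $\overline{py}=\frac1n\sum_i p_iy_i$,
\[
\widehat S_n = h(\bar p,\bar y,\overline{py}),
\]
because $\widehat{\mathrm{cov}}(p,y)=\overline{py}-\bar p\,\bar y$. Likewise, Eq.~\eqref{eq:population-functional} gives $S(P,Y)=h(\theta)$ with $\theta=(\mu,\pi,m_{PY})$. The theorem therefore reduces to properties of the vector of sample means $\hat\theta_n=(\bar p,\bar y,\overline{py})$ pushed through the fixed map $h$.

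\emph{Consistency.} The vectors $(P_i,Y_i,P_iY_i)$ are i.i.d. and bounded in $[0,1]^3$, so all their moments are finite. The weak law of large numbers applied coordinatewise gives $\hat\theta_n\xrightarrow{p}\theta$. The function $h$ is continuous on the open set
\[
U=\{(a,b,c):0<a<1,\ 0<b<1\},
\]
where the radicand $a(1-a)b(1-b)$ is strictly positive. Since $0<\mu<1$ and $0<\pi<1$, the limit $\theta$ lies in $U$, and the continuous mapping theorem yields $\widehat S_n=h(\hat\theta_n)\xrightarrow{p}h(\theta)=S(P,Y)$.

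There is one technical point. For finite $n$, $\hat\theta_n$ may fall outside $U$; this is the degenerate-marginal case excluded by the Assumption. That event has probability tending to zero, so assigning any value to $\widehat S_n$ there (for instance the numerical guard) does not affect either limit. I expect this to be the only genuine subtlety, and I would handle it by working on the event $\{\hat\theta_n\in U\}$, whose probability tends to $1$.

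\emph{Asymptotic normality.} The multivariate central limit theorem gives
\[
\sqrt n(\hat\theta_n-\theta)\Rightarrow N(0,\Sigma),
\]
where $\Sigma=\mathrm{Var}(P,Y,PY)$ is finite by assumption and automatically by boundedness. The map $h$ is continuously differentiable on $U$ because the denominator is bounded away from zero near $\theta$. The multivariate delta method then gives
\[
\sqrt n\{h(\hat\theta_n)-h(\theta)\}\Rightarrow N(0,g_\theta^\top\Sigma g_\theta),
\qquad g_\theta=\nabla h(\theta),
\]
which is Eq.~\eqref{eq:delta-method}. For completeness I would record the gradient. Write $N=c-ab$ and $Q=a(1-a)b(1-b)$. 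Then
\[
\partial_c h=Q^{-1/2},\qquad
\partial_a h=-bQ^{-1/2}-\tfrac12 N Q^{-3/2}(1-2a)b(1-b),
\]
and $\partial_b h$ is symmetric to $\partial_a h$. No step requires $\sigma_S^2>0$. If $g_\theta^\top\Sigma g_\theta=0$, the limit is the degenerate normal distribution, and the statement still holds in that sense.
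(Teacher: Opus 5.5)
Your proposal is correct and follows the paper's route. You write $\widehat S_n=h(\bar p,\bar y,\overline{py})$ using the closed form, then get consistency from the law of large numbers and continuity of $h$, and the normal limit from the multivariate central limit theorem and the delta method at $\theta$. Your handling of the degenerate-marginal event, whose probability tends to zero, and your explicit gradient are details the paper's proof leaves implicit.
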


\noindent\emph{Proof.}
The sample vector
$(\bar p,\bar y,n^{-1}\sum_i p_i y_i)$ converges to $\theta$ by the law of large
numbers and is asymptotically normal by the multivariate central limit theorem.
The closed form in Eq.~\eqref{eq:closed} is $h$ applied to this vector.
Continuity gives consistency; differentiability on the non-degenerate interior
gives the delta-method limit stated in Eq.~\eqref{eq:delta-method}.

\begin{proposition}[Calibration dependence]
\label{prop:calibration}
SoftMCC is not a rank-only statistic. Under a monotone recalibration
$g:[0,1]\to[0,1]$, its population value becomes
\[
S(g(P),Y)=
\frac{\mathbb E[g(P)Y]-\mathbb E[g(P)]\pi}
{\sqrt{\mathbb E[g(P)](1-\mathbb E[g(P)])\,\pi(1-\pi)}}.
\]
In general $S(g(P),Y)\ne S(P,Y)$, even when $g$ preserves every probability
rank. The temperature-scaling stress test uses
\[
g_T(p)=\sigma\{\mathrm{logit}(p)/T\},\qquad T>0,
\]
which preserves ordering but changes probability magnitudes.
\end{proposition}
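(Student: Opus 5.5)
The displayed formula needs no separate argument: it is the population functional in Eq.~\eqref{eq:population-functional} applied to the pair $(g(P),Y)$. Since $g$ maps into $[0,1]$, the random variable $g(P)$ is an admissible probability. We take $\mu_g=\mathbb E[g(P)]$ and $m_{g}=\mathbb E[g(P)Y]$ and assume $0<\mu_g<1$, as in the non-degeneracy assumption. The content of the proposition is the claim that $S(g(P),Y)\neq S(P,Y)$ in general, even for strictly increasing $g$. Because the claim is existential, the plan is to exhibit one explicit rank-preserving $g$, or one explicit distribution, for which the two values differ. A short structural remark then explains why rank statistics behave differently.

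For the structural remark, a rank-only statistic depends on $(P,Y)$ only through the ordering of $P$. Hence it is invariant under every strictly increasing $g$, and a single counterexample shows that SoftMCC is not rank-only. The closed form in Eq.~\eqref{eq:closed} and its population analogue show why invariance should fail. The numerator $\mathrm{cov}(g(P),Y)$ and the normalizer $\mu_g(1-\mu_g)$ depend on the magnitudes of $g(P)$, not just on its ordering.

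For the counterexample, take the simplest non-degenerate law. Let $\pi=1/2$, and let $P$ take the value $a$ when $Y=1$ and $b$ when $Y=0$, with $0<b<a<1$. A direct computation gives $\mu=(a+b)/2$ and $\mathrm{cov}(P,Y)=(a-b)/4$. Hence
\[
S(P,Y)=\frac{(a-b)/2}{\sqrt{4\mu(1-\mu)}}\cdot\frac{1}{1}
=\frac{a-b}{2\sqrt{(a+b)(2-a-b)}}\cdot 2\cdot\frac12 .
\]
Rather than tracking constants, we compare two concrete instances. With $(a,b)=(0.6,0.4)$ we get $\mu=1/2$ and $S=0.2/(2\cdot\tfrac12)=0.2$. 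Applying a strictly increasing $g$ that sends $0.6\mapsto0.9$ and $0.4\mapsto0.1$ gives $S=0.8$. Such a $g$ exists, for example the temperature map $g_T$ with a suitable $T<1$, because $g_T$ is symmetric about $1/2$ and can stretch $0.6$ to any value in $(1/2,1)$. Both configurations induce the same ranking of instances, yet the scores differ.

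The step needing the most care is making sure the chosen $g$ is exactly of the temperature-scaling form and strictly monotone. We will verify that $g_T$ is strictly increasing for every $T>0$, since it is a composition of increasing maps, and that $g_T$ maps $(0,1)$ onto $(0,1)$. This also confirms that the stress test preserves ordering while changing magnitudes, which closes the proposition.
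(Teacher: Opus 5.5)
Your proposal is correct, and it is more complete than the paper on this point, because the paper gives no proof of this proposition. Like you, it treats the displayed formula as Eq.~\eqref{eq:population-functional} evaluated at $(g(P),Y)$. It supports the non-invariance claim only by remarking that SoftMCC reads probability magnitudes, with the empirical mean Spearman agreement $0.851<1$ under temperature scaling as evidence. A finite-sample ranking change illustrates the claim but does not prove a population statement, so your explicit witness is the right way to close it.

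Your example also fits the population analogue of the paper's factorization $\widehat S_n=\rho(p,y)\,\kappa(p)$. Since $P$ is a deterministic increasing function of $Y$, $\rho=1$ and $S=\kappa$, so the jump from $0.2$ to $0.8$ is pure sharpness, the quantity temperature scaling acts on. The temperature can be written explicitly: by the symmetry $g_T(1-p)=1-g_T(p)$, $T=\ln 1.5/\ln 9\approx 0.185$ sends $0.6\mapsto 0.9$ and $0.4\mapsto 0.1$. To match the stated domain $[0,1]\to[0,1]$, you should also set $g_T(0)=0$ and $g_T(1)=1$.

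One slip needs fixing: your general display is off by a factor of $2$. From $\mathrm{cov}(P,Y)=(a-b)/4$ and $\pi(1-\pi)=1/4$ you get $S(P,Y)=(a-b)/\{2\sqrt{\mu(1-\mu)}\}=(a-b)/\sqrt{(a+b)(2-a-b)}$, whereas both of your displayed expressions equal half of this. Your numerical values $0.2$ and $0.8$ use the correct normalization, so the conclusion is unaffected, but the display should be replaced.
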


This property explains the empirical behavior in Section~\ref{sec:results}
because AUROC and AUPRC are rank-based, while SoftMCC and the Brier score depend on
calibrated probability values. SoftMCC is therefore threshold-free, but it should
be applied to calibrated probabilities rather than treated as calibration
invariant.

\begin{proposition}[Calibration-regime reduction to the Brier skill score]
\label{prop:brier}
Suppose the predicted probabilities are perfectly calibrated in the population, so that
$\mathbb E[Y\mid P]=P$. Then $\mu=\pi$, and the population core score reduces
from Eq.~\eqref{eq:population-functional} to
\[
S(P,Y)=\frac{\operatorname{Var}(P)}{\mu(1-\mu)}
=1-\frac{\mathbb E[(P-Y)^2]}{\pi(1-\pi)},
\]
the Brier skill score of $P$ relative to the base-rate forecast $\pi$. Because
$\pi(1-\pi)$ is common to candidates evaluated on the same population, their
population ranking by SoftMCC is identical to their ranking by the negative Brier
score.
\end{proposition}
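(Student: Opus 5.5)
The plan is to use the tower property of conditional expectation, conditioning on $P$, and to work directly from the population functional in Eq.~\eqref{eq:population-functional}.

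\emph{Step 1: the marginals match.} Calibration $\mathbb E[Y\mid P]=P$ gives
\[
\pi=\mathbb E[Y]=\mathbb E[\mathbb E[Y\mid P]]=\mathbb E[P]=\mu .
\]

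\emph{Step 2: the numerator is $\operatorname{Var}(P)$.} Since $P$ is $\sigma(P)$-measurable,
\[
m_{PY}=\mathbb E[P\,\mathbb E[Y\mid P]]=\mathbb E[P^2].
\]
Hence $m_{PY}-\mu\pi=\mathbb E[P^2]-\mu^2=\operatorname{Var}(P)$.

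\emph{Step 3: the denominator collapses.} With $\pi=\mu$, the quantity under the square root is $\mu(1-\mu)\pi(1-\pi)=\{\mu(1-\mu)\}^2$. Its square root is $\mu(1-\mu)$, which is positive by the non-degeneracy assumption. This yields the first equality, $S(P,Y)=\operatorname{Var}(P)/\{\mu(1-\mu)\}$.

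\emph{Step 4: Brier form.} Expand
\[
\mathbb E[(P-Y)^2]=\mathbb E[P^2]-2\,\mathbb E[PY]+\mathbb E[Y^2].
\]
Use $Y^2=Y$, so $\mathbb E[Y^2]=\pi$, together with $\mathbb E[PY]=\mathbb E[P^2]$ from Step 2. This gives $\mathbb E[(P-Y)^2]=\pi-\mathbb E[P^2]$. Then
\[
\pi(1-\pi)-\mathbb E[(P-Y)^2]=\mathbb E[P^2]-\pi^2=\operatorname{Var}(P),
\]
using $\pi=\mu$. Dividing by $\pi(1-\pi)=\mu(1-\mu)$ gives the Brier skill score expression. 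The reference forecast is the constant $\pi$, whose Brier score is $\mathbb E[(\pi-Y)^2]=\pi(1-\pi)$, so this is the Brier skill score relative to the base rate.

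\emph{Step 5: ranking.} All candidates are evaluated on the same population and share the same label $Y$, so $\pi$ is common to them. Each calibrated candidate's score is therefore $1-\mathrm{BS}/c$ with the same constant $c=\pi(1-\pi)>0$. This is a strictly increasing affine function of $-\mathrm{BS}$, so the two orderings coincide, ties included.

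\emph{Main obstacle.} The algebra is routine; the delicate point is scope. Step 5 requires every compared candidate to be calibrated. Otherwise that candidate's score is not of the form $1-\mathrm{BS}/c$ and the comparison fails, so the ranking claim must be stated only among calibrated candidates. I would also note the integrability needed for conditioning, which is automatic since $P$ and $Y$ are bounded.
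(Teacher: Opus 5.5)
Your proof is correct and follows the paper's argument essentially step for step. Both use the tower property to obtain $\mu=\pi$ and $\mathbb E[PY]=\mathbb E[P^2]$, collapse the denominator to $\mu(1-\mu)$, expand the Brier score using $Y^2=Y$, and get the ranking claim from an increasing affine map in $-\mathbb E[(P-Y)^2]$; your remark that the ranking applies only when every compared candidate is calibrated makes explicit what the paper's hypothesis already assumes.
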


\noindent\emph{Proof.}
Under $\mathbb E[Y\mid P]=P$, $\pi=\mathbb E[Y]=\mathbb E[P]=\mu$ and
$\mathbb E[PY]=\mathbb E[P\,\mathbb E[Y\mid P]]=\mathbb E[P^2]$. The numerator
specified by Eq.~\eqref{eq:population-functional} is
$\mathbb E[PY]-\mu\pi=\mathbb E[P^2]-\mu^2=\operatorname{Var}(P)$ and the
denominator is $\sqrt{\mu(1-\mu)\,\pi(1-\pi)}=\mu(1-\mu)$. Since $Y^2=Y$, the Brier
score is $\mathbb E[(P-Y)^2]=\mathbb E[P^2]-2\mathbb E[PY]+\mathbb E[Y^2]
=\mu-\mathbb E[P^2]=\mu(1-\mu)-\operatorname{Var}(P)$, and dividing by
$\pi(1-\pi)=\mu(1-\mu)$ yields the skill-score form. Because $\pi(1-\pi)$ is common
to candidates evaluated on the same target population, $S$ is an increasing
affine function of $-\mathbb E[(P-Y)^2]$, so the two population rankings
coincide.

Proposition~\ref{prop:brier} places SoftMCC precisely between the hard-label MCC
family (Proposition~\ref{prop:reduction-mcc}) and probability-quality scoring. In
the calibrated regime that the framework recommends, it coincides with an
established proper scoring rule. SoftMCC is therefore not a competitor to the Brier score.
Because the implication is one-way, some miscalibrated joint distributions also
satisfy $S(P,Y)=\mathrm{BSS}(P,Y)$; equality does not establish calibration, and
the gap does not identify a calibration-error functional. Data generated
from a calibrated mechanism also need not produce exact equality on a realized finite
validation sample because its empirical moments fluctuate. The empirical study
in Section~\ref{sec:results} therefore uses the equality only as a
calibrated-population reference and describes observed differences as departures
from that identity, not as a calibration diagnostic or a performance claim.

\subsection{Evaluation protocol}
\label{sec:protocol}
Metrics are assessed as model-selection criteria. For each dataset and repeat,
the data are split into training, validation, and test partitions. A calibrated
candidate pool $\mathcal F=\{f_1,\ldots,f_M\}$ is fit on training data and scored
on validation data. For a selection score $s$, define
\begin{equation}
V_s(m)=s\{y^{\mathrm{val}},p_m^{\mathrm{val}}\},
\qquad
\widehat m_s=\arg\max_{1\le m\le M} V_s(m),
\label{eq:selector}
\end{equation}
with the Brier score multiplied by $-1$ so that larger values are uniformly
better. Exact ties in $V_s$ are resolved by the earliest candidate in the fixed
declaration order, so the selector is deterministic; ties do occur under severe
imbalance, where several candidates can attain the same validation score.
Threshold-based baselines (MCC
and $F_1$) use the validation threshold that maximizes the corresponding score;
SoftMCC, AUROC, AUPRC, and the Brier score are threshold-free.

The selected model is evaluated on test at a common operating rule,
\begin{equation}
U_s=
\mathrm{MCC}\!\left(
y^{\mathrm{test}},
\mathbf 1\{p_{\widehat m_s}^{\mathrm{test}}
\ge \widehat\tau_{\widehat m_s}^{\mathrm{val}}\}
\right),
\label{eq:test-utility}
\end{equation}
where $\widehat\tau_m^{\mathrm{val}}$ is the validation MCC-optimal threshold for
candidate $m$. The endpoint in Eq.~\eqref{eq:test-utility} separates the model chosen by a
metric from the threshold used to report final test utility.

For ranking stability, let $r_{qm}^{(s)}$ be the rank assigned by score $s$ to
candidate $m$ in repeat $q\in\{1,\ldots,R\}$, let
$A_m^{(s)}=\sum_q r_{qm}^{(s)}$, let $\bar A=R(M+1)/2$, and let
$T_q^{(s)}=\sum_g(t_{qg}^3-t_{qg})$, where $t_{qg}$ is the size of tie group $g$
in repeat $q$. Kendall's coefficient of concordance is
\begin{equation}
W_s=
\frac{12\sum_{m=1}^M(A_m^{(s)}-\bar A)^2}
{R^2(M^3-M)-R\sum_{q=1}^R T_q^{(s)}}.
\label{eq:kendall-w}
\end{equation}
Tied per-repeat scores receive mid-ranks, and the denominator
in Eq.~\eqref{eq:kendall-w} removes the concordance that would otherwise be assigned to ties. The deterministic
fixed declaration-order tie rule in Eq.~\eqref{eq:selector} is used only to choose one model
for test utility; it does not enter $W_s$. All candidate-level validation scores,
mid-ranks, and tie-group corrections are archived. With $M=5$ candidates and
$R=12$ repeats, $W_s$ remains coarsely quantized, so its bootstrap intervals are
reported as approximate.
Calibration-shift sensitivity is measured by applying $g_T$ with
$T\in\{0.5,1.5,2,3\}$, covering both sharpening ($T<1$) and flattening ($T>1$)
of the probability scale, and computing Spearman agreement between the $T=1$
and shifted candidate rankings; reported agreement values are means over this
temperature grid unless a single temperature is stated. Paired selection
comparisons use the Wilcoxon signed-rank test and matched-pairs rank-biserial
correlation. The six prespecified SoftMCC-versus-baseline tests form one family,
with Holm adjustment controlling familywise error at $0.05$. Within-dataset
Cliff's $\delta$ is retained only as a descriptive distributional sensitivity
summary because it does not encode the block pairing. BCa bootstrap confidence
intervals are also reported, and multi-dataset rank comparisons use the Friedman
test with Nemenyi critical differences
\citep{wilcoxon1945individual,cliff1993dominance,efron1979bootstrap,
demsar2006statistical}.
\section{Experimental Design}
\label{sec:design}

This empirical protocol evaluates whether SoftMCC is useful as a validation-set
model-selection score. The protocol does not compare raw SoftMCC values with raw
MCC, AUROC, AUPRC, Brier, or $F_1$ values because those scores live on different
measurement scales. Instead, each metric is allowed to rank the same candidate
models on the validation split, the top-ranked model is selected, and the
selected model is evaluated on the held-out test split with a common test-MCC
utility. This design asks a practical question: if a practitioner lets a metric
choose the model, does that choice remain stable across repeated splits and does
it preserve test utility?

The study uses 18 imbalanced binary tabular settings. The original six comprise
breast cancer Wisconsin, two controlled-imbalance synthetic sets, two
credit-card fraud prevalence variants, and IoTID20
\citep{ullah2020iotid20}. Before the expansion outcomes were computed, a
mechanical rule was fixed for the 27 datasets returned by
\texttt{imblearn.datasets.fetch\_datasets}
\citep{lemaitre2017imbalanced}: at least $1500$ rows, at least $40$ minority
positives, positive prevalence no greater than $10\%$, and no raw image, audio,
or text origin. Twelve datasets qualified, all with finite arrays; the minority
class was mapped to the positive label.

These 18 settings are not 18 fully independent data sources. The two credit-card
settings share all $492$ positive rows, the two synthetic settings use the same
generator with different seeds, and \texttt{car\_eval\_34} and
\texttt{car\_eval\_4} share the same feature matrix with different binary
targets. The primary rank tests use settings as blocks. A conservative
source-family sensitivity additionally groups each of these three pairs and
the two thyroid tasks distributed from the same source collection, giving 14
families. Kendall's $W$ is averaged within each family for each metric before
within-family midranking and recomputation of the Friedman and Nemenyi
statistics. The exact empirical inventory appears in Table~\ref{tab:datasets}.

\begin{table*}[t]
\centering
\caption{The empirical suite combines six original settings with twelve imbalanced binary tabular settings selected by the prespecified expansion rule.}
\label{tab:datasets}
\scriptsize
\setlength{\tabcolsep}{4pt}
\begin{tabular*}{\textwidth}{@{\extracolsep{\fill}}lrrrrr@{}}
\toprule
Dataset & Rows & Features & Groups & Positives & Prevalence\\
\midrule
breast cancer        & 569     & 30  & 569     & 212     & $37.3\%$\\
synth($5\%$)         & 4{,}000 & 25  & 4{,}000 & 221     & $5.5\%$\\
synth($1\%$)         & 4{,}000 & 25  & 4{,}000 & 52      & $1.3\%$\\
credit-card($1\%$)   & 49{,}200 & 29 & 48{,}631 & 492    & $1.0\%$\\
credit-card($0.5\%$) & 98{,}400 & 29 & 96{,}700 & 492    & $0.5\%$\\
IoTID20              & 40{,}000 & 79 & 28{,}285 & 2{,}561 & $6.4\%$\\
abalone              & 4{,}177 & 10  & 4{,}177 & 391     & $9.4\%$\\
car\_eval\_34        & 1{,}728 & 21  & 1{,}728 & 134     & $7.8\%$\\
car\_eval\_4         & 1{,}728 & 21  & 1{,}728 & 65      & $3.8\%$\\
coil\_2000           & 9{,}822 & 85  & 8{,}380 & 586     & $6.0\%$\\
mammography          & 11{,}183 & 6  & 7{,}849 & 260     & $2.3\%$\\
ozone\_level         & 2{,}536 & 72  & 2{,}536 & 73      & $2.9\%$\\
protein\_homo        & 145{,}751 & 74 & 144{,}968 & 1{,}296 & $0.9\%$\\
sick\_euthyroid      & 3{,}163 & 42  & 3{,}109 & 293     & $9.3\%$\\
thyroid\_sick        & 3{,}772 & 52  & 3{,}771 & 231     & $6.1\%$\\
USCrime              & 1{,}994 & 100 & 1{,}994 & 150     & $7.5\%$\\
wine\_quality        & 4{,}898 & 11  & 3{,}961 & 183     & $3.7\%$\\
yeast\_ml8           & 2{,}417 & 103 & 2{,}417 & 178     & $7.4\%$\\
\bottomrule
\end{tabular*}
\par\smallskip
\noindent\parbox{\textwidth}{\scriptsize\itshape Notes. Groups are unique exact feature-label groups used by the duplicate-safe split. Six original settings use their original sources; twelve additional settings come from the Zenodo benchmark collection exposed by imbalanced-learn. Related prevalence, generator, and target variants are not independent data sources.}
\end{table*}

For every dataset and repeat, the same candidate pool is trained: logistic
regression with $C\in\{0.1,1,10\}$ and histogram gradient boosting with maximum
depth in $\{3,6\}$. All candidates are isotonic-calibrated on training data only
(two-fold \texttt{CalibratedClassifierCV}) and implemented with scikit-learn
\citep{pedregosa2011scikit}. The calibrated probabilities are then passed to the
validation metrics, which rank candidates with the threshold policy fixed for
each baseline. Among strictly proper scoring rules, the Brier score is used as the
selector because Proposition~\ref{prop:brier} links SoftMCC to it
directly; the logarithmic score is recorded only as a calibration diagnostic,
since it is unbounded and numerically unstable when a validation split contains
very few positives, as in the $0.5\%$ and $1\%$ prevalence settings.

The repeated-split design uses $12$ duplicate-safe grouped repeats per setting,
producing $216$ setting-by-repeat blocks. Each exact feature-label duplicate group
is assigned wholly to train, validation, or test, so identical rows cannot cross
split boundaries. Duplicate groups are initially split $75/25$ into a
training and validation pool and a held-out test set, and the pool is then split
$70/30$ into training and validation, giving $52.5\%/22.5\%/25\%$
train/validation/test group fractions with stratification on the group label;
repeat $q$ uses random seed $42+q$ for $q=0,\ldots,11$. Each block yields three
evidence layers: selected test-MCC utility, ranking reproducibility across
repeats within the dataset, and agreement under monotone calibration shift. The
statistical procedures defined in Section~\ref{sec:protocol} are applied with BCa
bootstrap $95\%$ confidence intervals over $B=2000$ resamples and with Cliff's
$\delta$ computed within each dataset before averaging. Because the twelve
repeats within a setting draw on overlapping data, the $216$ blocks are not
mutually independent; the reported
$p$-values and intervals are therefore nominal descriptive summaries in the
sense of \citet{nadeau2003inference} and are not used to establish equivalence. The analysis treats non-significant selected-utility
differences as a bounded empirical finding, not as proof of identical metrics.

All comparisons are paired by setting and repeat, holding training and
calibration fixed while changing only the validation selector. Test labels are
not used for candidate ranking, thresholds are chosen only on validation data,
and test MCC is computed after model selection. Uncertainty is therefore read
from within-block differences between selectors rather than from unpaired
summaries.

A data-quality check of the active real-data caches found exact duplicate
feature-label rows. The grouped-split analysis addresses the leakage risk by
grouping those rows before splitting; the split manifest records zero
training to validation, training to test, and validation to test group overlap across all
$216$ blocks. This does not remove the broader limitation that duplicate-rich
real datasets have fewer unique feature-label groups than rows, but it prevents
identical rows from appearing on both sides of a model-selection boundary.

The manuscript-facing run was executed entirely in one pinned ARM Linux
environment: Python 3.12.3, NumPy 2.4.6, pandas 2.2.3, SciPy 1.16.3,
scikit-learn 1.8.0, and imbalanced-learn 0.14.2. An informational comparison
against the earlier Windows evidence family failed numerical reproduction,
including different selected candidates and an absolute test-MCC difference of
up to $0.38$. Cross-platform values are therefore not mixed. Every current
manuscript table, figure, and statistical summary comes from the single ARM run,
while the Windows run is retained only as immutable provenance. Only the
repeated-split analysis in Table~\ref{tab:main} is reported; pilot and prototype
runs are excluded from the empirical evidence.

This modest candidate pool limits coverage but makes the ranking comparison
interpretable because every selector sees the same trained and calibrated
models. Confirmatory extensions should preserve this shared-pool principle
before broadening the benchmark. To prevent adaptive comparison, the metric
definitions, candidate pool, split counts, calibration procedure, and
statistical summaries remain fixed across all settings.

\section{Results}
\label{sec:results}

The three empirical endpoints used to evaluate each validation score are
summarized in Table~\ref{tab:main}: ranking stability, selected-model test
utility, and calibration-shift agreement. The stability result is
comparison-specific, and the selected-utility endpoint supplies a counterweight
to it rather than an additional advantage.

\begin{table*}[t]
\centering
\caption{The 18-setting protocol summarizes ranking stability, selected-model test utility, and calibration-shift agreement for seven validation selectors.}
\label{tab:main}
\small
\setlength{\tabcolsep}{4pt}
\begin{tabular*}{\textwidth}{@{\extracolsep{\fill}}lccccc@{}}
\toprule
 & \multicolumn{2}{c}{Stability} & \multicolumn{2}{c}{Selection utility} & Calibration\\
\cmidrule(lr){2-3}
\cmidrule(lr){4-5}
Metric & Mean $W$ & Mean rank & Mean test MCC & Mean rank & Spearman\\
\midrule
SoftMCC      & \textbf{0.659} & \textbf{2.31} & 0.5581 & 4.81 & 0.851\\
MCC@best     & 0.463 & 3.92 & 0.5626 & 3.50 & 1.000\\
$F_1$@best   & 0.443 & 4.25 & \textbf{0.5638} & \textbf{2.89} & 1.000\\
AUPRC        & 0.450 & 4.53 & 0.5628 & 4.00 & 1.000\\
AUROC        & 0.519 & 4.03 & 0.5599 & 4.61 & 1.000\\
Brier        & 0.520 & 3.83 & 0.5612 & 3.89 & 0.782\\
MCC@0.5      & 0.410 & 5.14 & 0.5530 & 4.31 & 1.000\\
\bottomrule
\end{tabular*}
\par\smallskip
\noindent\parbox{\textwidth}{\small\itshape Notes. Lower mean rank is better, and $W$ is the mean of per-setting tie-corrected Kendall concordance. Test MCC is averaged over 216 setting-by-repeat blocks; Spearman is averaged against the $T=1$ ranking over four shifted temperatures. The setting-level stability Friedman test gives $p=0.007$ with Nemenyi CD $=2.124$, the selection-utility test gives $p=0.117$, and Holm correction across six paired comparisons retains $F_1$@best only.}
\end{table*}

\paragraph{Finding 1: selected utility does not show a general advantage.}
SoftMCC-selected models have mean test MCC $0.5581$ over the $216$ blocks. The
three comparisons with negative mean differences and unadjusted $p<0.05$ are
MCC@best, for which SoftMCC is lower by $0.0045$
(raw Wilcoxon $p=0.0138$, Holm $p=0.069$), lower than $F_1$@best by $0.0057$
($p=0.0023$, Holm $p=0.014$), and lower than Brier by $0.0031$ ($p=0.0366$,
Holm $p=0.146$). The corresponding effective nonzero pair counts are $111$,
$114$, and $60$, with matched-pairs rank-biserial correlations $-0.269$,
$-0.329$, and $-0.310$. The remaining mean differences are $-0.0047$ against
AUPRC (raw $p=0.0669$, Holm $p=0.201$), $-0.0018$ against AUROC
($p=0.700$, Holm $p=0.700$), and $+0.0051$ against MCC@0.5
($p=0.341$, Holm $p=0.682$). Their effective nonzero pair counts are $104$,
$111$, and $100$, with rank-biserial correlations $-0.207$, $-0.042$, and
$0.110$. None of these three comparisons is significant before or after
adjustment. Only the $F_1$@best comparison remains significant after Holm
correction across all six baselines. The BCa $95\%$ intervals for the
$F_1$@best and Brier differences exclude zero, whereas the MCC@best interval
$[-0.0098,0.0012]$ and the intervals for the remaining comparisons cross zero.

These block-level tests are nominal because repeats within each setting share
observations. At the dataset level, SoftMCC has mean selection rank $4.81$
versus $2.89$ for $F_1$@best, but the selection Friedman test is not significant
($\chi^2=10.19$, $p=0.1169$). The evidence therefore treats the three negative
mean differences with unadjusted $p<0.05$ as limitations under the locked
paired rule without asserting familywise significance beyond $F_1$@best or a
general dataset-level utility ordering. SoftMCC and Brier choose the same
candidate in $154$ of $216$ blocks, the highest agreement with any baseline;
this finite-sample agreement does not test the population identity in
Proposition~\ref{prop:brier}.

\begin{figure*}[t]
\centering
\includegraphics[width=0.92\textwidth]{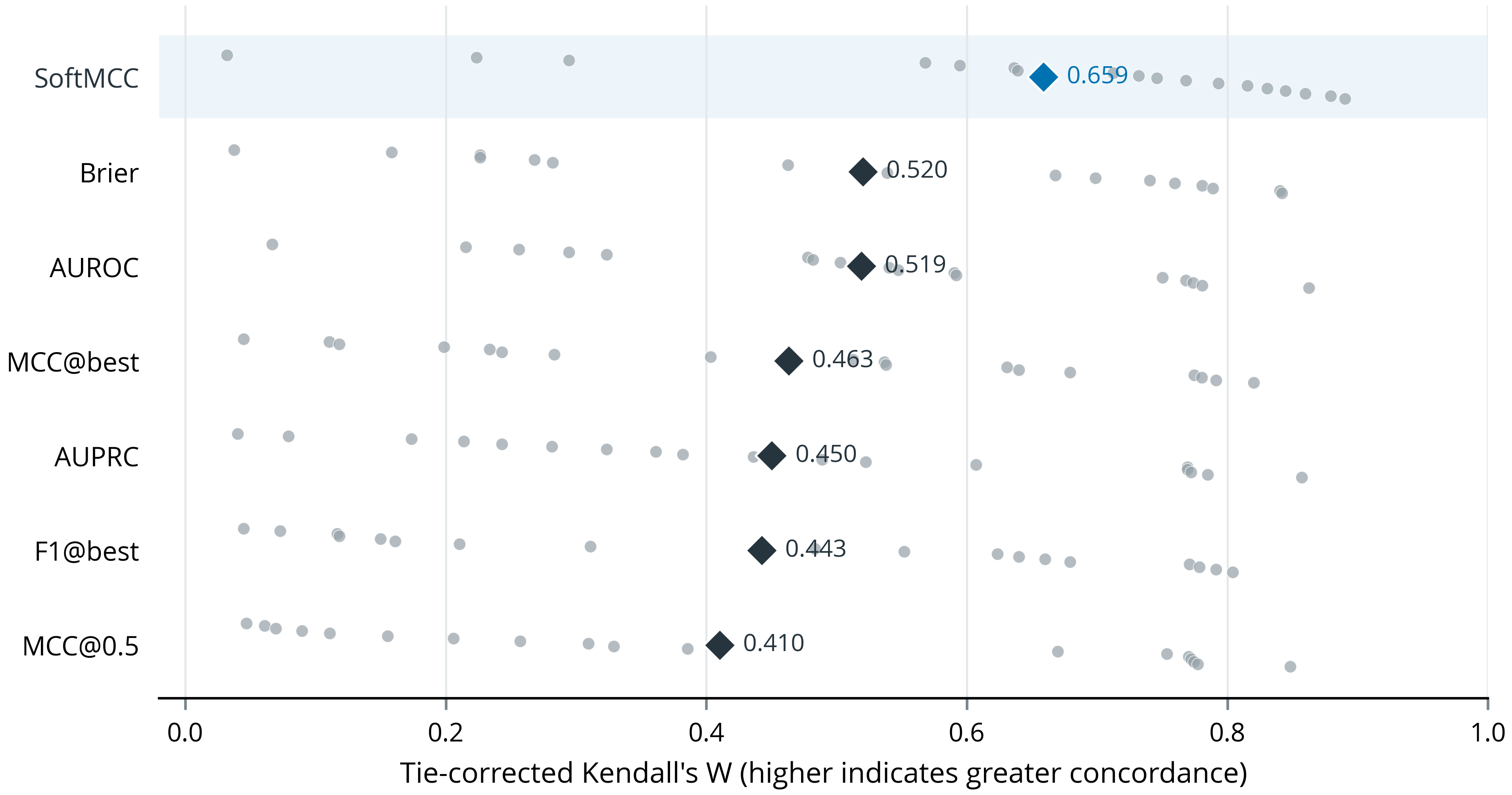}
\caption{Across 18 settings, points show per-setting tie-corrected Kendall's $W$ and diamonds show selector means.}
\label{fig:stability}
\end{figure*}

\paragraph{Finding 2: the stability advantage is comparison-specific.}
The reproducibility of each metric's candidate ranking across repeats differs
substantially (Fig.~\ref{fig:stability}, Table~\ref{tab:main}). SoftMCC attains
the highest mean Kendall's $W$ across the 18 settings ($0.659$) and the best
stability mean rank ($2.31$). The Friedman test rejects a common rank
distribution ($\chi^2=17.65$, $p=0.0072$). With CD $=2.124$, SoftMCC is
separated from AUPRC (rank gap $2.22$) and MCC@0.5 (gap $2.83$).

No significant separation is established against Brier, MCC@best, AUROC, or
$F_1$@best. SoftMCC has the highest or tied-highest per-setting $W$ in 12 of 18
settings, but it is not uniformly stable. Its $W$ is $0.224$ on
\texttt{car\_eval\_34} and $0.032$ on \texttt{wine\_quality}. The supported
claim is further bounded by the 14-source-family sensitivity. Its Friedman test
remains significant ($\chi^2=18.31$, $p=0.0055$), with SoftMCC mean rank
$2.04$ and CD $=2.408$, but only MCC@0.5 remains separated; the AUPRC rank gap
$2.36$ is below the CD. The stability claim that persists across both analyses
is therefore restricted to MCC@0.5, not a general advantage over every
selector.

\begin{figure*}[t]
\centering
\includegraphics[width=0.92\textwidth]{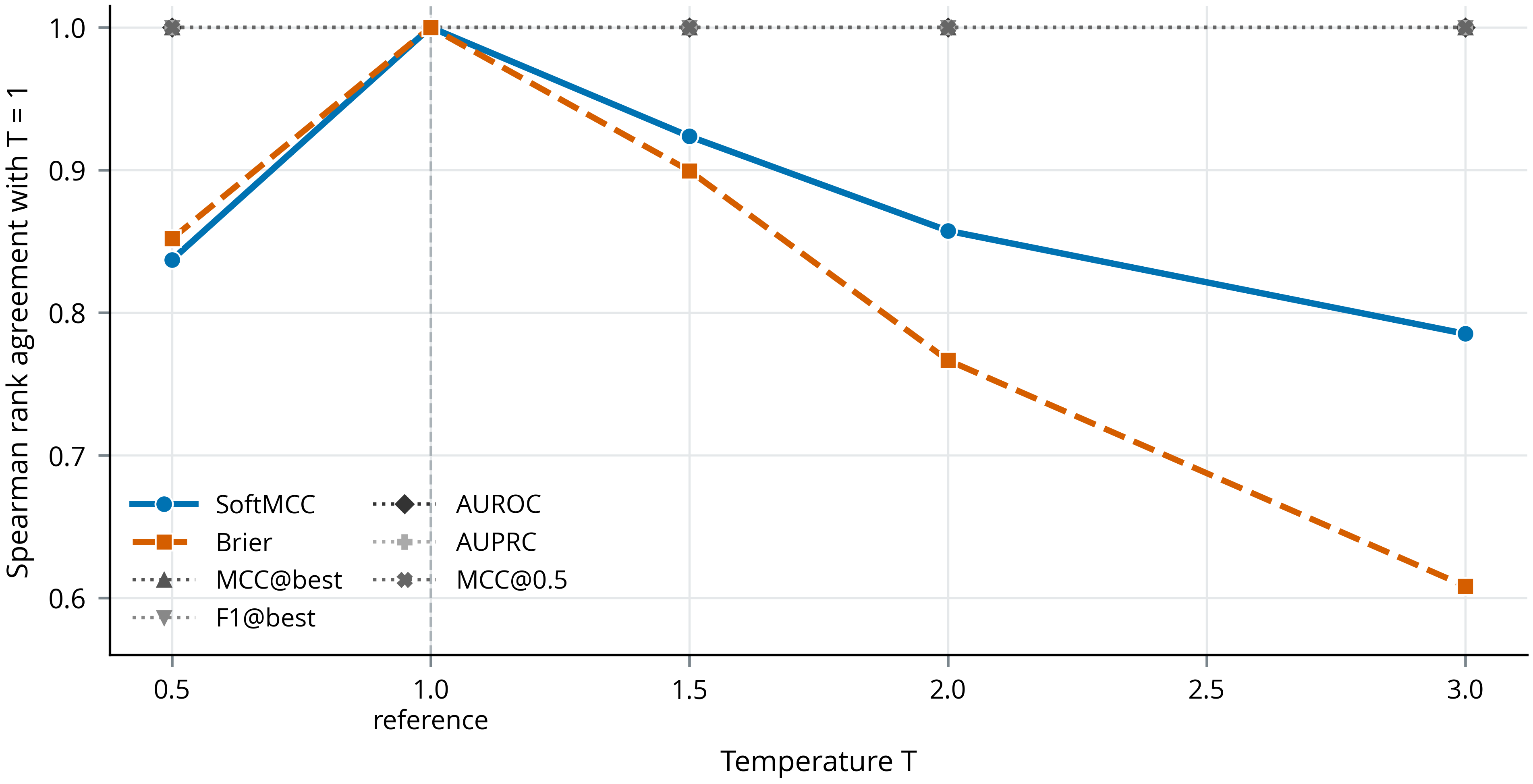}
\caption{Across 18 settings, candidate-rank agreement with $T=1$ shows temperature sensitivity for SoftMCC and Brier and invariance for the other selectors.}
\label{fig:calibration}
\end{figure*}

\paragraph{Control: the stability is label-dependent; sharpness is a partial
mechanism.}
Because the closed form factorizes as $\widehat S_n=\rho(p,y)\,\kappa(p)$ with the
sharpness factor $\kappa(p)=\sqrt{\operatorname{var}(p)/\{\bar p(1-\bar p)\}}$
depending only on the probability vector, a high Kendall's $W$ could in principle
reflect that label-free factor rather than useful signal. Three controls on the
same splits rule this out as the sole explanation. Across $200$ deterministic
validation-label permutations, SoftMCC's mean $W$ falls from $0.659$ to a null
mean of $0.092$ (central $95\%$ permutation interval $[0.058,0.129]$;
one-sided empirical exceedance $0.005$). A $\kappa$-only selector remains
reproducible ($W=0.602$), and a $\rho$-only selector reaches $W=0.520$.
SoftMCC rankings agree with the $\kappa$-only and $\rho$-only rankings at mean
Spearman values $0.737$ and $0.806$, respectively. The observed stability
therefore depends on label information, while probability sharpness remains a
substantial partial mechanism and a source of calibration sensitivity.

\begin{figure*}[t]
\centering
\includegraphics[width=0.75\textwidth]{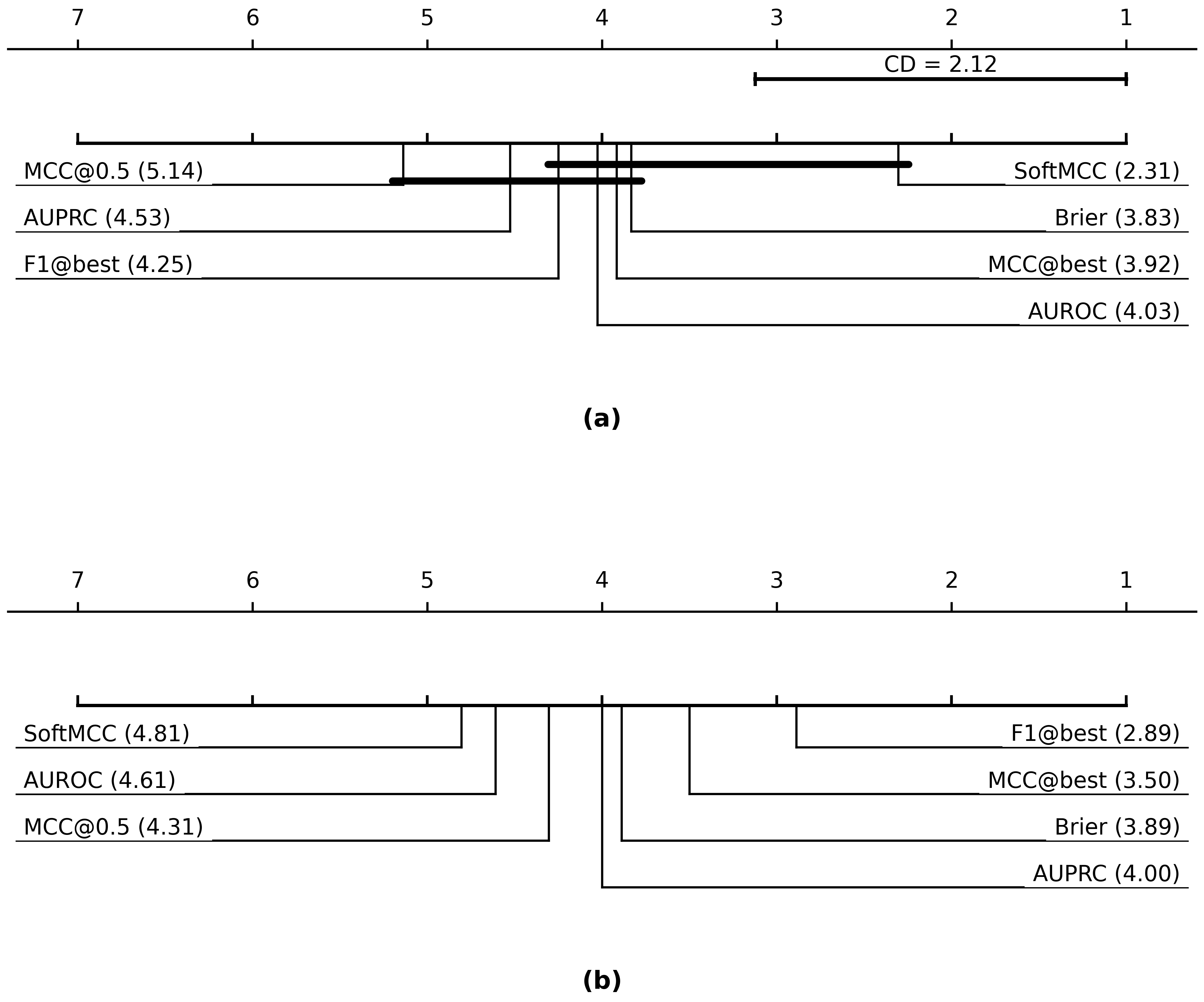}
\caption{Panel (a) shows 18-setting stability mean ranks with Nemenyi comparison, whereas panel (b) shows descriptive selection-utility mean ranks because its Friedman omnibus test is nonsignificant.}
\label{fig:cd}
\end{figure*}

\paragraph{Finding 3: calibration dependence (limitation).}
The calibration comparison in Figure~\ref{fig:calibration} shows that under
temperature-scaling miscalibration over the grid
$T\in\{0.5,1.5,2,3\}$, the rankings induced by rank-based and
threshold-optimized metrics are invariant by construction (Spearman $=1.000$),
whereas SoftMCC (mean Spearman $0.851$ over all 18 settings and four
temperatures, BCa $95\%$ CI $[0.824,0.872]$) and the Brier score ($0.782$,
$[0.749,0.809]$) shift. SoftMCC's mean agreement is $0.785$ at $T=3$, compared
with $0.608$ for Brier. This matches
Proposition~\ref{prop:calibration} because SoftMCC reads probability magnitudes, not
only ranks. Temperature scaling is monotone and therefore structurally favors
rank-based metrics; the test bounds SoftMCC's calibration sensitivity rather
than providing evidence of invariance. The practical implication is that
SoftMCC should be applied only to calibrated probabilities.

A contrast between the primary stability ranks and descriptive
selection-utility ranks appears in Figure~\ref{fig:cd}. The stability panel shows the setting-level
separations and the stricter source-family boundary; the utility panel omits
post-hoc marks because its omnibus test is not significant.

The result pattern supports a conditional use case. Across both analyses,
SoftMCC is more stable than MCC@0.5 for calibrated candidate models. Its
negative mean test-MCC differences against MCC@best, $F_1$@best, and Brier
remain limitations; only the $F_1$@best comparison survives Holm correction.
SoftMCC is therefore not a general-purpose metric replacement, and
probability-valued selection does not remove the need for an operating-threshold
decision.

Observed calibration movement follows from the non-rank-only form in
Proposition~\ref{prop:calibration}, while the grouped splits remove the immediate
duplicate-leakage pathway. Related dataset variants and the modest candidate
pool still limit generalization. The strongest supported interpretation is a
comparison-specific stability advantage accompanied by a mixed utility profile
within the single ARM evidence family.
\section{Discussion}
\label{sec:discussion}

The results support a bridge interpretation rather than a general performance
advantage. SoftMCC has the best stability mean rank across duplicate-safe
repeated splits, with primary separation from AUPRC and MCC@0.5; conservative
source-family aggregation retains only the MCC@0.5 separation. Its
selected-model utility is not higher than that of established alternatives,
and the negative mean differences against MCC@best, $F_1$@best, and Brier
remain limitations. The contribution instead locates probability-valued MCC
among validation selectors. The closed form in \eqref{eq:closed} expresses the
core as a scaled covariance, Proposition~\ref{prop:reduction-mcc} keeps it
within the MCC family, and Proposition~\ref{prop:brier} anchors it to an
established proper score under perfect calibration.

This position distinguishes SoftMCC from adjacent metric families. Rank-based
measures such as AUROC and AUPRC retain ordering information
\citep{fawcett2006roc,davis2006relationship,saito2015precision,
flach2015precision}, whereas proper scores and calibration diagnostics assess
probability-scale quality
\citep{brier1950verification,guo2017calibration,kull2017beta,
silva2023calibration}. SoftMCC removes threshold search from MCC but still
depends on probability magnitudes. Under perfect calibration it coincides with
the Brier skill score and gives the same population candidate ordering as
negative Brier. The converse does not hold, and a finite-sample gap also
contains sampling fluctuation. The observed departure between SoftMCC and Brier is
therefore neither a calibration-error measure nor evidence that the scores are
competitors. Their frequent agreement in selected candidates is consistent with
the calibrated bridge, but it does not verify the population identity because
candidate pools are finite and calibration is imperfect.

Earlier post-training evaluation work
\citep{wang2013pcen,yacouby2020probabilistic,aguilarruiz2024certainty} and
differentiable metric-loss methods
\citep{abhishek2021mcc,tsoi2022bridging,han2024anyloss} already establish the
probability-valued count kernel. The present characterization is MCC-specific
because it combines the calibrated Brier identity and its non-converse with tie-aware
ranking reproducibility and selected-model hard-MCC utility on a shared
candidate pool.

The selected-model endpoint keeps the statistical claim tied to practice.
SoftMCC, MCC, AUROC, AUPRC, Brier, and $F_1$ do not share a numerical scale, so
the experiment compares the consequence of letting each score rank the same
calibrated models. The setting-level analysis separates SoftMCC from AUPRC and
MCC@0.5 for stability, but only MCC@0.5 remains separated after related settings
are grouped by source family. Selected utility supplies no parallel advantage;
only the $F_1$@best comparison remains significant after Holm correction, and
the dataset-level utility test is not significant. Ranking reproducibility must
therefore be weighed against the complete utility profile.

Choice of inferential unit also affects how this statistical boundary should be
read. Paired block tests retain the
repeatwise matching between selectors, but repeats within a setting share
observations, so their $p$-values remain nominal. The setting-level Friedman and
Nemenyi analysis uses every benchmark variant as a block, whereas the
source-family analysis reduces dependence among related prevalence, generator,
target, and thyroid variants. A comparison retained under both units is more
defensible than a setting-level separation alone, but it remains specific to
the benchmark and candidate pool. In practice, this distinction changes the unit of
generalization from individual settings to source families rather than merely
repeating the same rank test.

Ranking stability is useful only when the resulting model choice remains
acceptable. Kendall's $W$ measures agreement of complete candidate orderings
across repeats, not confidence that the top-ranked candidate has the highest
deployment utility. High concordance can arise when one signal consistently
orders a modest candidate pool, even if utility differences among the selected
models are small or unfavorable. Tie correction prevents repeated equal scores
from inflating concordance, while the common test-MCC endpoint checks the
consequence of the ranking decision. Neither step converts stability into a
utility guarantee. Applications should therefore decide whether a more
reproducible validation ordering justifies the observed utility trade-off
rather than treating either endpoint in isolation.

Calibration also defines when this operational use is defensible.
Transformations that preserve probability order can still change SoftMCC
because the score uses probability
magnitudes. Meaningful calibration can retain information discarded by
thresholded or rank-only summaries, whereas unreliable calibration can make a
rank-based score or task-specific threshold rule more suitable. The
temperature-scaling experiment is consequently a sensitivity analysis, not
evidence of calibration invariance.

The mechanism controls further clarify why the observed stability arises.
Label permutation largely removes the reproducibility signal, showing that
stability is not produced by a label-free probability summary alone. The
separate $\rho$ and $\kappa$
selectors show that label association and probability sharpness each retain
part of the observed ordering consistency. SoftMCC combines both components, so
sharpness is a partial mechanism and a source of calibration sensitivity rather
than a sufficient explanation. These controls explain the stability pattern;
they do not provide additional evidence of selected-model utility.

Independence among real-data splits imposes another evidence boundary. Exact
feature-label grouped splits prevent duplicate rows from crossing training,
validation, and test partitions in the active settings. They do not turn
duplicate-rich sources, shared prevalence variants, or related targets into
independent external cohorts.

Cross-platform reproducibility is a further constraint on
interpretation. The earlier Windows evidence family did not reproduce the ARM
results closely enough to support pooling, so the manuscript uses only the
pinned ARM execution family. This decision prevents platform differences from
being averaged into the reported effects, but it does not identify which
software, numerical, or execution component caused the divergence. Reproduction
on another controlled stack is therefore an external validation task, not
evidence available to the present claim. Publishing manifests, environment
versions, execution records, and artifact hashes makes that task auditable
without implying that one platform is intrinsically authoritative.

SoftMCC is most relevant when validation requires a reproducible,
threshold-free ranking among calibrated probabilistic models under class
imbalance. It is less suitable when deployment centers on a fixed operating
threshold, explicit error costs, or unreliable calibration. Threshold-optimized
MCC, cost curves, expected-loss analysis, AUROC/AUPRC, or proper scoring rules
may then be more direct decision tools
\citep{drummond2006cost,hand2009hmeasure,hernandez2012unified}. The
recommendation is metric placement rather than metric replacement, and final
operating-threshold utility should be reported in the terms required by the
deployment context.

The empirical scope remains limited to calibrated binary tabular selection
with a moderate candidate pool, related settings, synthetic prevalence shifts,
and monotone temperature scaling. It does not establish claims for multiclass
or structured outputs, segmentation training, online drift, cost-sensitive
deployment, or uncalibrated models. The $\kappa$ control shows that probability
sharpness accounts for part of the observed stability, while the source-family
analysis narrows the comparison that remains supported. Grouped splitting
closes the immediate duplicate-leakage pathway but not external-validity
concerns, and the earlier Windows family did not reproduce numerically on ARM.
All manuscript-facing results are therefore restricted to the single pinned ARM
evidence family.
\section{Conclusion}
\label{sec:conclusion}

SoftMCC links probability-valued MCC theory to a threshold-free validation
protocol built on established probabilistic confusion counts. The core score
has a covariance form, reduces to hard MCC, and coincides with the Brier skill
score under perfect population calibration. Because the converse fails, this
identity is a model-selection bridge rather than a calibration diagnostic. The
empirical evidence supports a comparison-specific ranking-stability benefit but
not a general selected-model utility advantage. SoftMCC should therefore be
used conditionally when calibrated probabilities and reproducible model
selection matter and the observed utility trade-off is acceptable. Its value
lies in making that trade-off explicit without treating threshold-free
selection as a substitute for deployment utility. The framework clarifies where
a probability-valued MCC can inform validation without replacing
task-specific operating decisions.

Several extensions remain open for future work, and each changes the validation
selector or the deployment question rather than refining the present study.
Cost ratios,
reference-prevalence weighting, net benefit, temporal drift, multiclass outputs,
and domain-specific utility therefore belong to separate studies, each with its
own prespecified candidate space, calibration method, split construction, and
utility endpoint. Nearer to the present target, independent external cohorts and
broader candidate pools would test stability beyond related benchmark settings.
Controlled cross-platform reproduction would resolve the divergence that
confined these results to one execution family, while varying the calibration
method at a fixed utility endpoint would separate calibration sensitivity from
selector behavior. Stronger formal presentation cannot substitute for broader
empirical evidence, just as a larger benchmark cannot repair an ill-specified
evaluation target.

\section*{Glossary}

\begin{description}
  \item[ARM evidence family.] The project-specific label for the single pinned
        64-bit Arm Linux execution family from which every reported table,
        figure, and statistic is derived. The earlier Windows run is retained
        only as provenance.
  \item[AUPRC and AUROC.] AUPRC is the area under the curve traced by precision
        against recall, and AUROC is the area under the receiver operating
        characteristic curve. Both summarize score rankings without selecting
        one operating threshold.
  \item[BCa interval.] A bias-corrected and accelerated bootstrap confidence
        interval used here for paired utility differences and
        calibration-shift agreement.
  \item[Brier score and BSS.] The Brier score is the mean squared probability
        error $\mathbb E[(P-Y)^2]$, for which smaller values are better. The
        Brier skill score (BSS) is
        $1-\mathbb E[(P-Y)^2]/\{\pi(1-\pi)\}$ relative to the base-rate forecast
        $\pi$. Under perfect population calibration, population SoftMCC equals
        BSS; the converse does not hold.
  \item[Calibration.] In the theoretical bridge, perfect population calibration
        means $\mathbb E[Y\mid P]=P$, so a predicted probability corresponds to
        the conditional event rate.
  \item[Candidate pool.] The shared set of five trained and
        isotonic-calibrated model candidates ranked by every validation
        selector within a repeat.
  \item[$F_1$@best.] The hard-label $F_1$ score at the candidate-specific
        validation threshold that maximizes $F_1$.
  \item[Friedman, Nemenyi, and CD.] The Friedman test is the blockwise omnibus
        rank test. Following a significant omnibus result, the Nemenyi procedure
        compares mean ranks; its critical difference (CD) is the minimum
        mean-rank gap required for separation at the chosen significance level.
  \item[Holm adjustment.] A stepwise multiplicity correction used here to
        control familywise error across the six prespecified paired utility
        comparisons.
  \item[Kendall's $W$.] The coefficient of concordance used to quantify
        agreement among repeated candidate rankings. The reported version uses
        midranks and a denominator correction for ties; $W=1$ denotes complete
        concordance.
  \item[MCC.] The Matthews correlation coefficient, equal to the Pearson
        $\phi$ correlation between predicted and true binary labels. It uses
        true positives, true negatives, false positives, and false negatives,
        and ranges from $-1$ to $1$.
  \item[MCC variants.] MCC@0.5 is hard-label MCC after thresholding
        probabilities at $0.5$. MCC@best is hard-label MCC at the
        validation-selected threshold that maximizes MCC for each candidate.
  \item[Selection utility.] The held-out test MCC of the candidate chosen by a
        validation selector, evaluated with that candidate's validation
        MCC-optimal threshold.
  \item[Settings and families.] A setting is one evaluated dataset variant. A
        source family groups related settings before the sensitivity rank
        analysis; the study has 18 settings and 14 source families.
  \item[Soft confusion counts.] Probability-mass analogues of the four hard
        counts. $\widehat{\mathrm{TP}}$, $\widehat{\mathrm{TN}}$,
        $\widehat{\mathrm{FP}}$, and $\widehat{\mathrm{FN}}$ denote soft true
        positives, true negatives, false positives, and false negatives,
        respectively.
  \item[SoftMCC.] The post-training MCC validation framework studied here. Its
        core score applies the classical MCC formula to soft confusion counts,
        and its selector ranks a shared candidate pool without threshold search;
        it is not a training loss.
  \item[Spearman agreement.] Spearman rank correlation between the candidate
        ranking at $T=1$ and the ranking after temperature scaling; a value of
        $1$ means that the candidate ordering is unchanged.
  \item[Temperature scaling.] A monotone rescaling of logits by a positive
        temperature $T$. Values $T<1$ sharpen and values $T>1$ flatten
        probabilities while preserving their order.
  \item[$\rho$ and $\kappa$.] In the SoftMCC factorization, $\rho(p,y)$ is the
        Pearson probability-label correlation and $\kappa(p)$ is the label-free
        probability-sharpness factor based on the ratio of probability variance
        to Bernoulli variance.
\end{description}

\section*{Data and Code Availability}
The scorer implementation and complete 18-setting ARM evidence package are
publicly available at
\url{https://github.com/canay/softmcc_theory}.
The repository
contains source links for all public datasets, documents the raw-data sharing
boundary, and provides grouped-split experiment and analysis code, preprocessing
instructions, environment versions, split and dataset manifests, execution
records, artifact hashes, and derived summaries used for the manuscript tables
and figures. Raw third-party records are not redistributed; their acquisition
and reuse remain governed by the source repositories' terms.

\section*{Author Contributions}
\"Ozkan CANAY: Conceptualization, Methodology, Software, Validation, Formal
analysis, Investigation, Data curation, Writing -- original draft,
Writing -- review and editing, and Visualization.

\section*{Use of Generative AI and AI-Assisted Technologies}
During the preparation of this work, the author used OpenAI Codex and Anthropic
Claude Code for English translation, language editing, code-debugging
assistance, and consistency checks. The author reviewed and edited all affected
material and takes full responsibility for the content of the article.

\acks{This research did not receive any specific grant from funding agencies in
the public, commercial, or not-for-profit sectors. The author declares no
competing interests.}

\bibliography{references}

\end{document}